\newcommand{\rank}[1]{\operatorname{rank}\Big(#1\Big)}
 \newcommand{\zbf}{\mathbf{z}}
\newcommand{\Sigmabf}{\boldsymbol{\Sigma}}
\newcommand{\epsilonbf}{\boldsymbol{\epsilon}}
\newcommand{\deltabf}{\boldsymbol{\delta}}
\newcommand{\Abf}{\mathbf{A}}  \newcommand{\Cbf}{\mathbf{C}}
 \newcommand{\Ebf}{\mathbf{E}} 
  \newcommand{\Ibf}{\mathbf{I}}
  \newcommand{\Lbf}{\mathbf{L}}
\newcommand{\Mbf}{\mathbf{M}}  
 \newcommand{\Qbf}{\mathbf{Q}} \newcommand{\Rbf}{\mathbf{R}}
\newcommand{\Sbf}{\mathbf{S}}  \newcommand{\Ubf}{\mathbf{U}}
\newcommand{\Vbf}{\mathbf{V}}  
 \newcommand{\Zbf}{\mathbf{Z}}
\newcommand{\zeros}{\textbf{0}}
\theoremstyle{break}\theoremheaderfont{\normalfont\bfseries}}
\theoremstyle{plain}\theorembodyfont{\normalfont\rmfamily}}
\theoremstyle{break}\theoremheaderfont{\normalfont\bfseries}
\theoremstyle{break}\newtheorem{proposition}{Proposition}\theoremheaderfont{\normalfont\bfseries}
\theoremstyle{break}\theoremheaderfont{\normalfont\bfseries}
\theoremstyle{break}\newtheorem{corollary}{Corollary}\theoremheaderfont{\normalfont\bfseries}
\theoremstyle{break}\theoremheaderfont{\normalfont\bfseries}
\title{ \LARGE \bf A Novel  Approach for Phase Identification in Smart Grids\\ Using Graph Theory and Principal Component Analysis }
\author{Satya Jayadev P, Aravind Rajeswaran, \\ Nirav P Bhatt, Ramkrishna Pasumarthy %
\thanks{Satya Jayadev P and Ramkrishna Pasumarthy are with Department of Electrical Engineering, and Nirav P Bhatt and Aravind Rajeswaran are with Department of Chemical Engineering, Indian Institute of Technology Madras, India.
        {\tt\small {ee15s059@ee.iitm.ac.in}, ramkrishna@ee.iitm.ac.in, niravbhatt@iitm.ac.in aravindr@smail.iitm.ac.in}}
}
\begin{document}

\maketitle
\thispagestyle{empty}
\pagestyle{empty}

\begin{abstract}
Consumers with low demand, like households, are generally supplied single-phase power by connecting their service mains to one of the phases of a distribution transformer. The distribution companies face the problem of keeping a record of consumer connectivity to a phase due to uninformed changes that happen. The exact phase connectivity information is important for the efficient operation and control of distribution system. We propose a new data driven approach to the problem based on Principal Component Analysis (PCA) and its Graph Theoretic interpretations, using energy measurements in equally timed short intervals, generated from smart meters. We propose an algorithm for inferring phase connectivity from noisy measurements. The algorithm is demonstrated using simulated data for phase connectivities in distribution networks. 

\textbf{Keywords:} Power Distribution Networks, Smart Meters, Phase Identification, Principle Component Analysis
\end{abstract}

\section{Introduction}
\noindent Electrical power is generally transmitted in three phases due to technical and economic advantages over single-phase transmission \cite{Bakshi09}. At the distribution level, 3-phase power is distributed to the consumers either in one phase or three phases depending on the customer demand. For single-phase consumers, the service mains is connected to one of the phases of the distribution transformer, A phase, B phase or C phase.

Accurate phase-load connectivity information is required for balancing loads on all the phases. Balanced loads mitigate the problem of overloading on a phase. Also technical losses can be reduced leading to better efficiency in distribution.  Unbalanced loads on phases result in voltage imbalance affecting many consumers, especially with rotating electric machines. From the control point of view, the controllability limits of the power system are affected by voltage imbalance. Balanced loads on phases ensures voltage balance in the three phases.

Accurate phase connectivity information also helps in detection and localization of non-technical losses and state estimation in power distribution system.  Hence, reliable phase connectivity information is essential for efficient monitoring and optimization of distribution  networks \cite{Giannakis13}.
 
The problem faced by distribution companies is in maintaining an accurate record of the phase-load connectivity. This information might not be accurately available at all times because of changes that take place due to repairs and maintenance. Also the consumers might have the facility to switch between phases and they do so when phase tripping occurs. The distribution utility is uninformed of such changes. There are techniques such as signal injection and manual verification to determine phase connectivity but utilities refrain from them due to high costs and possible inaccuracies \cite{Caird10}. 

With the advent of smart grid technologies, distribution companies are installing smart meters at important nodal points in the network including feeders, transformers and consumer service mains. These meters can communicate readings to a central data centre with greater frequency and sometimes in real time. 

In this paper, we will  deal with identifying the phase connectivity of loads using the smart meter data. We will  propose an algorithm by combining graph theory and principal component analysis based on the principle of energy conservation. We also take into account the noise in the data arising due to technical losses, errors in smart meter readings and errors due to imperfect time synchronization of smart meter clocks.
 

\section{Related Work}
\noindent The problem of phase identification is gaining recognition in recent times with the large scale penetration of smart grids. Smart grids have lead to the search for new methods of inferring the connectivity. Chen et al. \cite{Chen11} proposed a phase identification device for phase measurement of underground transformers. Zhiyu designed a signal injector device that can be used for phase identification \cite{Zhiyu13}. However the additional hardware and staff required for these devices to work, makes them costly alternatives. Dilek et al. \cite{Dilek02} proposed a search algorithm to determine phase information using power flow analysis and load data but it ignores the noise and uncertainty in data. Arya et al. gives an approach to infer phase connectivity from time series of power measurements using mixed integer programming (MIP) \cite{Arya11} . The MIP solver is time intensive in arriving at the solution. Pezeshki and Wolfs presented a technique to identify the phases based on cross-correlation method using the time series of voltage measurements \cite{Pezeshki12}. A. Tom proposed a linear regression based algorithm which correlates between consumer voltage and substation voltage \cite{Tom13}. It requires the Geographical Information System (GIS) model which may not always be available. A method using data obtained from micro synchrophasors (uPMU) apart from the voltage magnitudes data is developed \cite{Wen15}. It proposed a brute-force search algorithm based on linear programming optimization structure, to determine phase connectivity with certain constraints on voltage magnitudes and phase angles.

We follow a method similar to that proposed in \cite{Aravind15}, in which a water distribution network is reconstructed from flow measurements.

\section{ Preliminaries}
\subsection{Graph Theory}
 The incidence matrix ($\Abf$) of a graph describes the incidence of edges on nodes and is defined as follows for a directed graph:\\
\[A_{ij} = 
\begin{cases}
\text{+1 if edge j enters node i} \\ 
\text{-1 if edge j leaves node i} \\
\text{0 if edge j is not incident on i}
\end{cases}  \]
\begin{proposition}[ Refer Theorem 8 of Chapter 3 in \cite{Andrasfai91}]
A directed graph (or a directed forest) can be uniquely constructed from an incidence matrix, provided there are no self loops. 
\end{proposition}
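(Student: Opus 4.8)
The plan is to show two things: first, that an incidence matrix of a directed graph without self loops determines the graph up to isomorphism; and second, to recover explicitly the edge set and orientation from the matrix entries.

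First I would fix conventions: let $\Abf$ be an $n \times m$ matrix whose rows are indexed by the $n$ nodes and whose columns are indexed by the $m$ edges, with entries from $\{-1, 0, +1\}$ according to the definition given above. The key structural observation is that each column encodes exactly one edge: since there are no self loops, every edge $j$ has a distinct tail and head, so column $j$ contains exactly one entry equal to $+1$ (in the row of the head node $i$, which the edge enters) and exactly one entry equal to $-1$ (in the row of the tail node, which the edge leaves), and all other entries are $0$. Conversely, given such a column, the pair (node with the $-1$, node with the $+1$) recovers the directed edge and its orientation unambiguously. So I would first verify this one-to-one correspondence between columns of $\Abf$ and directed edges.

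Next I would carry out the reconstruction. Given $\Abf$, define the vertex set as $\{1, \dots, n\}$ (one vertex per row), and for each column $j$ read off the unique index $t(j)$ with $A_{t(j),j} = -1$ and the unique index $h(j)$ with $A_{h(j),j} = +1$; declare a directed edge from $t(j)$ to $h(j)$. This produces a directed graph (or a forest, if the underlying graph happens to be acyclic — the argument is identical). To establish \emph{uniqueness}, I would argue that any directed graph whose incidence matrix equals $\Abf$ must have this same vertex set (forced by the number of rows, up to relabelling) and must, for each edge, have the tail and head dictated by the positions of the $-1$ and $+1$ in the corresponding column; hence it is isomorphic to the reconstructed graph. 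The no-self-loop hypothesis is exactly what guarantees each column genuinely has both a $+1$ and a $-1$ (a self loop would contribute a zero column, or a $+1$ and $-1$ cancelling in the same row depending on convention, and in either case would be indistinguishable or lost), so the hypothesis cannot be dropped.

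The main obstacle, such as it is, is less a mathematical difficulty than a matter of stating the uniqueness claim precisely: ``uniquely constructed'' should be read as ``unique up to graph isomorphism,'' since the labelling of vertices is only as canonical as the ordering of the rows of $\Abf$. One should also be slightly careful about multi-edges (two columns being identical corresponds to parallel edges in the same direction) and about anti-parallel edges (columns that are negatives of each other) — but neither causes a problem: the correspondence between columns and oriented edges remains a bijection. Since this is essentially Theorem~8 of Chapter~3 in \cite{Andrasfai91}, I would keep the proof short, emphasizing the column-reading procedure and the role of the no-self-loop assumption, and cite the reference for the detailed verification.
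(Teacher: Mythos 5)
Your argument is correct and is exactly the standard column-reading proof underlying the cited result: the paper itself offers no proof of this proposition, simply deferring to Theorem~8 of Chapter~3 in \cite{Andrasfai91}, and your observation that (absent self loops) each column contains exactly one $+1$ and one $-1$, so that edges and their orientations can be read off column by column, is the substance of that theorem. Your caveats about uniqueness up to row labelling and about parallel or anti-parallel edges are appropriate and do not conflict with anything in the paper.
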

We make use of this proposition to determine the connectivity from incidence matrix. 

\subsection{Principal Component Analysis (PCA)}
\noindent PCA is one of the most widely used techniques in multivariate statistical analysis. PCA provides the best approximation of linear model between a set of variables, of which some are dependent on the other. A set of samples corresponding to the variables, measured at different time instances can be used to apply PCA. The linear model can be estimated even in the presence of Gaussian noise \cite{Jolliffe02}.

Let $\Zbf$ be the $(n \times N)$-dimensional matrix obtained by stacking $n$ variables of $N$ samples each. Let $\zbf_j$ be the vector of values of $n$ variables in the $j^{th}$ sample.  These variables are linearly related and described by the following model: 
\begin{equation}
\Cbf\,\zbf_j = \zeros
\end{equation}
where $\Cbf$ can be referred to as the Constraint matrix of $(n_d \times n)$ dimension where $n_d$ is the number of dependent variables. We call it the constraint matrix because it gives the physical constraints of the system.

Note that the data lies in the subspace orthogonal to the space spanned by the rows of the constraint matrix. So to represent the constraint matrix, we need a set of basis vectors orthogonal to the subspace in which the data lies. This is obtained from the eigenvectors of the Covariance matrix corresponding to the least $m$ eigenvalues, where $m$ is the number of dependent variables \cite{Aravind15}.
The Covariance matrix $\Sbf_z$ is  
\begin{equation}
\Sbf_z = \Zbf\Zbf^T
\end{equation}
The eigenvectors of the covariance matrix can be determined using Singular Value Decomposition (SVD) of the data matrix. SVD of the data matrix can be written as:
\begin{equation}
svd(\Zbf) = \Ubf_1\Sbf_1\Vbf_1^T + \Ubf_2\Sbf_2\Vbf_2^T \label{Eq3}
\end{equation}
where $\Ubf_1$ are the set of orthonormal eigenvectors corresponding to the $(n-m)$ largest eigenvalues of $\Sbf_z$ while $\Ubf_2$ are the orthogonal eigenvectors corresponding to the smallest $m$ eigenvalues of $\Sbf_z$. It has been shown that $S_R(\Ubf_2^T) \sim S_R(\Cbf)$ where $S_R(.)$ indicates the subspace spanned by the rows of $(.)$ matrix \cite{Narasimhan15}. Then $\Ubf_2^T$ satisfies the following relationship:
\begin{equation}
\Ubf_2^T\zbf=0
\end{equation}
where 
$\zbf = \left[\begin{array}{c} z_1, \, z_2, \,\cdots,  z_n \end{array}\right]^T$ is the vector of $n$ variables.
It is to be observed that the constraint matrix suffers from rotational ambiguity. 
\begin{equation}
\Qbf\Ubf_2^T\zbf=\zeros
\end{equation}
where $\Qbf$ is a non-singular matrix. So the estimated constraint matrix may not represent the physical relationships even though the correct subspace has been extracted. The estimated constraint matrix, $\Cbf$, can at best be a basis for the row space of true constraint matrix. It is shown that by partitioning $\Cbf$ into columns corresponding to dependent variables and those of independent variables, a matrix $\Rbf$ (Regression Matrix), which is unique to the system, can be obtained \cite{Aravind15}. $\Rbf$ can be computed as:
\begin{eqnarray}
 \Rbf &=& -\Cbf_d^{-1}\Cbf_i \label{Eq1}\\
\zbf_d &=& \Rbf\,\zbf_i 
 \end{eqnarray}
where $\Cbf_d$ are the columns of $\Cbf$ corresponding to dependent variables, $\Cbf_i$ are the columns of $\Cbf$ corresponding to independent variables, $\zbf_i$ is the vector of independent variables and $\zbf_d$ is the vector of dependent variables.

Many variants to PCA have been developed which pertain to different cases of error covariance matrix \cite{Narasimhan15}. Hence, this approach can be applied to noisy data.

\subsection{Integrating Graph Theory \& PCA}
\noindent In our problem, we deal with a forest of directed trees with only one parent node each and many child nodes. The challenge is to determine which child nodes are connected to which parent node. It can be observed that the sub-matrix of the incidence matrix of the forest, with rows corresponding to the parent nodes, is sufficient to infer the connectivity. This connectivity is unique as described in Proposition 1. The incidence matrix can be written as: 
\begin{equation}
\Abf = \left[ \begin{array}{c} \Abf_d \\ \Abf_i \end{array} \right] 
\end{equation}
where $\Abf_d$ are the rows corresponding to parent nodes and $\Abf_i$ are the rows corresponding to child nodes. It is pertinent to note that $\Abf_d$ only comprises of -1 and 0 as its elements. Further, it is to be noted that each column of $\Abf_d$ contains only one -1 and rest zeros.

Mathematically, the parent nodes can be considered dependent variables and child nodes to be independent variables. Hence it can be easily verified that the regression matrix $\Rbf$ which regresses the dependent variables on the independent variables is in fact the matrix $-\Abf_d$ and the uniqueness of $\Rbf$ makes it comparable element-wise to $-\Abf_d$.

Now we formulate the problem and present an algorithm which determines the regression matrix from the data pertaining to our network. 

\section{Problem Formulation}
\noindent The phase connectivity network can be represented as a forest with three trees as its components, each tree corresponding to a phase. The child nodes of a tree represent the single-phase meters connected to the corresponding phase. \footnote{This notation can be extended to 3-phase loads by considering them as three separate loads and taking the per phase readings.} The 3-phase transformer meter is considered equivalent to three single-phase meters by taking the readings corresponding to each phase, separately. Therefore, the root of each tree corresponds to a phase of the transformer meter as shown in Fig.~\ref{Figure1}.
\begin{figure}
\centering
\includegraphics[width=80mm,height=40mm]{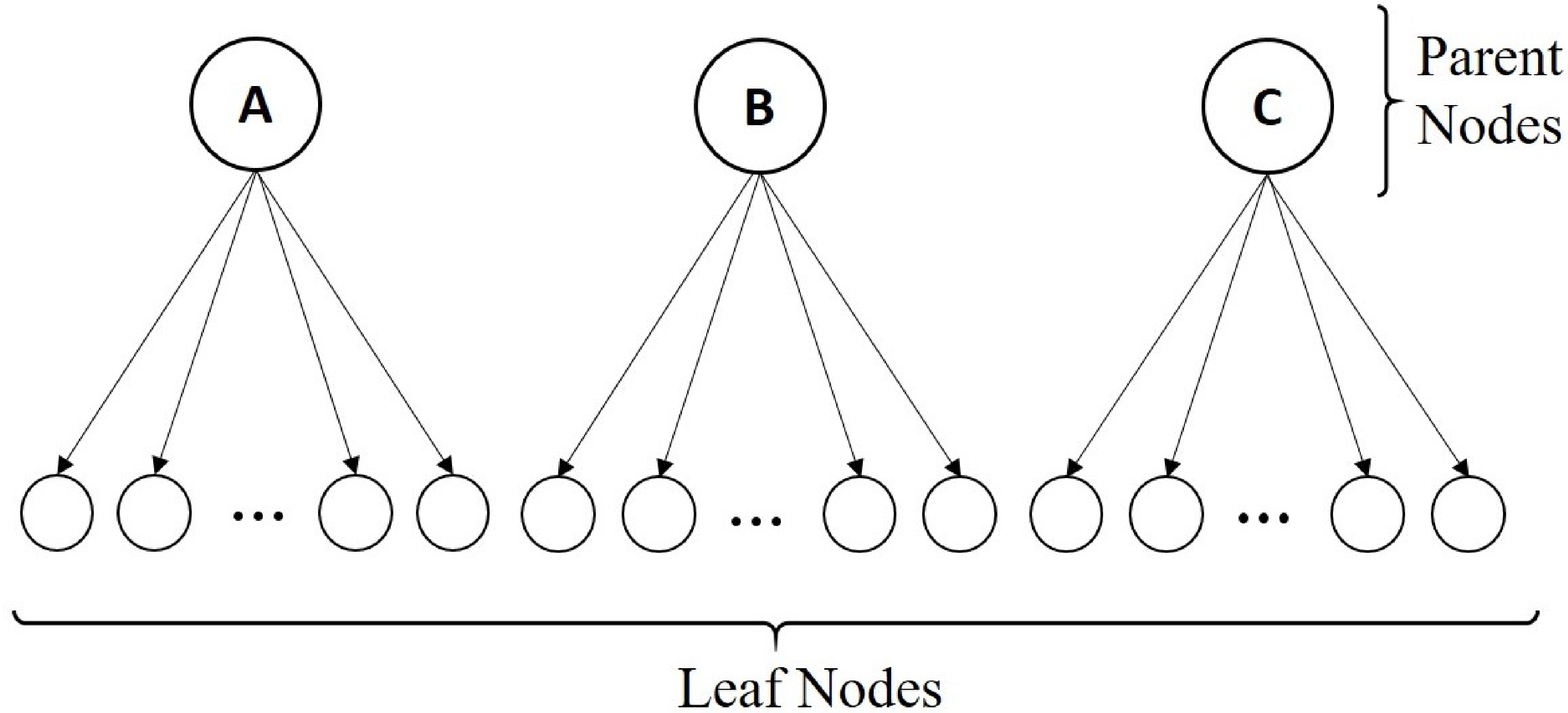}
\caption{Graph Representation of Phase Connectivity}\label{Figure1}
\end{figure}

In our approach, energy measurements in watt-hour (Wh), in equal time intervals, generally 15 minutes or 30 minutes, are collected from all the consumer meters and transformer meter to form the data matrix. Let $\Zbf$ be the data matrix, $n$ be the number of meters (or nodes), $n_d$ be the number of parent nodes and $n_i$ be the number of child nodes,
\begin{equation}
\Zbf = \left[ \begin{array}{c}z_{ij} \end{array} \right]_{(n \times N)}
\end{equation}
where $z_{ij}$ is the energy measurement corresponding to the $i^{th}$ meter in $j^{th}$ time interval. By definition,
\begin{eqnarray}
n_d &=& 3\\
n_i &=& n-3
\end{eqnarray}

The principle of conservation of energy implies that the energy supplied by each phase of transformer is equal to sum of energies consumed by all the consumers connected to that phase of the transformer, in a given time interval. In graph theory view point, this principle implies that the parent node (meter) reading is equal to the sum of child nodes (meters) readings. So there exits a linear relationship between the nodal measurements which PCA exploits to determine the constraint matrix and consequently the regression matrix. 
\begin{equation}
z_j^k = \sum\limits_{i=1}^{n_k} z_{ij}, \; \forall \, k = A,B,C \label{Eq 4}
\end{equation}
where $z_j^k$ is the energy measured in phase $k$ in the $j^{th}$ time interval, $n_k$ is the number of single-phase meters connected to phase $k$ and $z_{ij}$ is the energy measured in $i^{th}$ meter in the  $j^{th}$ time interval.

In practice, due to losses and other errors, this relationship is only approximate. We will try to infer the exact connectivity using the distributions of the noise components. The inclusion of these components in the formulation is discussed later.

Apart from the basic assumptions for PCA to work \cite{Jolliffe02}, we make the following assumptions:
\begin{enumerate}
\item There is no theft of electricity and un-metered loads are generally estimated and so are separated from the readings before applying the algorithm. 
\item The clocks of all the meters in the network are time synchronized.
\item The loads do not change phases while the $N$ measurements are recorded.
\end{enumerate}

\section{An Approach for Phase Identification}
\noindent In this section, two cases for phase identification using data are considered: (A) Noiseless data set and (B) Noisy data set.

\subsection{Noiseless Case}
\noindent With no noise, Eq.~\eqref{Eq 4} holds exactly, and we get the exact incidence sub-matrix, except for some numerical residues. 
The connectivity is determined by the following steps:
\begin{enumerate}
\item PCA is applied and the data matrix is decomposed using Eq.~\eqref{Eq3}
\item  As we know that there are only three dependent variables, we take the eigenvectors corresponding to least three eigenvalues of $\Sbf_z$ to get the constraint matrix $\Cbf_{(3 \times n)}$.
\begin{equation}
\Cbf=\Ubf_2^T
\end{equation}
\item  The columns corresponding to dependent and independent variables are separated and the regression matrix $\Rbf_{(3 \times n_i)}$ is calculated using Eq.~\eqref{Eq1}.
\item  The regression matrix is rounded off to truncate any possible numerical residues and the resultant is the incidence sub-matrix from which the phase connectivity can be inferred.
\end{enumerate}

Now the question is how many readings are required to infer the connectivity uniquely. We answer this question in Proposition 2 by establishing the lower bound of $N$ for any potential data set. 

\begin{proposition}
Let $N$ be the number of readings (or samples). In the noiseless case, the minimum number of linearly independent readings required to infer the graph uniquely is equal to $n_i$, the number of independent nodes (or consumers meters) in the network.
\begin{equation}
N\geq n_i \label{Eq2}
\end{equation}
\end{proposition}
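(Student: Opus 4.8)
\emph{Proof plan.} The idea is to pin down the exact dimension of the subspace in which the data must lie, observe that the graph can be recovered if and only if PCA extracts that subspace \emph{exactly}, and then count how many readings are needed to span it.

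First I would record the structural fact underlying everything: in the noiseless case every reading $\zbf_j$ satisfies $\Cbf\zbf_j = \zeros$ for the \emph{true} constraint matrix $\Cbf$, whose rank is $n_d = 3$ because there is exactly one independent conservation relation per phase. Hence every column of $\Zbf$ lies in $\ker\Cbf$, a subspace of dimension $n - n_d = n_i$, so that $\operatorname{rank}(\Zbf) \le n_i$; and of course $\operatorname{rank}(\Zbf) \le N$ as well. Thus if the $N$ readings are linearly independent, then $\operatorname{rank}(\Zbf) = \min(N, n_i)$, and the whole question reduces to: how large must $\operatorname{rank}(\Zbf)$ be?

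Next, the necessity direction, which I expect to be the crux. I would show that the forest can be inferred from $\Zbf$ only when $\operatorname{rank}(\Zbf) = n_i$, i.e.\ when $\operatorname{col}(\Zbf)$ fills all of $\ker\Cbf$. The only information $\Zbf$ carries about the constraints is that the rows of any admissible constraint matrix must be orthogonal to $\operatorname{col}(\Zbf)$. If $\operatorname{rank}(\Zbf) = r < n_i$, then $\operatorname{col}(\Zbf)^\perp$ has dimension $n - r > 3$; the decomposition \eqref{Eq3} produces a $\Ubf_2$ with $n-r$ columns (equivalently, $\Sbf_z$ has more than three zero eigenvalues), so the step ``take the eigenvectors of the least three eigenvalues'' is not even well defined. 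Every $3$-dimensional subspace of $\operatorname{col}(\Zbf)^\perp$ annihilates all the readings, the true row space $S_R(\Cbf)$ being only one of them; choosing a different one, partitioning its columns into dependent and independent blocks, and forming $\Rbf = -\Cbf_d^{-1}\Cbf_i$ via \eqref{Eq1} yields in general a different regression matrix and, after rounding, a different incidence sub-matrix $-\Abf_d$ — hence a different forest consistent with the same data. So uniqueness forces $\operatorname{rank}(\Zbf) = n_i$, and with $\operatorname{rank}(\Zbf) \le N$ this gives $N \ge n_i$. The delicate point here is exhibiting genuine ambiguity at the level of the \emph{graph} (two admissible incidence sub-matrices), not merely at the level of constraint matrices sharing a row space; this can be made precise by noting that the dependent-variable block stays invertible on a neighbourhood of $S_R(\Cbf)$ within the Grassmannian of $3$-planes in $\operatorname{col}(\Zbf)^\perp$, so the map to $\Rbf$ is non-constant there.

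Finally, to see that $n_i$ is attained — so that the minimum is \emph{equal} to $n_i$ — I would construct an explicit admissible noiseless data set of exactly $n_i$ linearly independent readings: take the sub-block of $\Zbf$ formed by the child-node rows to be nonsingular (for instance the $n_i \times n_i$ identity), and fill in the three parent-node rows by the per-phase conservation law. Then $\operatorname{rank}(\Zbf) = n_i = n - n_d$, so $\Sbf_z$ has precisely three zero eigenvalues, $S_R(\Ubf_2^{\mathrm{T}}) = S_R(\Cbf)$ is recovered exactly, and by the uniqueness of $\Rbf$ together with Proposition~1 the forest is reconstructed. Hence $N \ge n_i$, with equality achievable, which is the claim.
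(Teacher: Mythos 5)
Your proposal is correct and follows essentially the same route as the paper: the data lies in the $n_i$-dimensional constraint-compatible subspace, so recovering the regression matrix $\Rbf$ uniquely from $\Rbf\,\Zbf^i=\Zbf^d$ forces the independent block to have full row rank $n_i$, and since $\rank{\Zbf^i}\leq\min(n_i,N)$ this gives $N\geq n_i$. You merely flesh out two points the paper leaves implicit --- the ambiguity among annihilating subspaces when the rank is deficient, and an explicit data set attaining $N=n_i$ --- which strengthens rather than changes the argument.
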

\begin{proof} With no noise, all the samples clearly lie in the subspace represented by the constraint matrix and PCA gives a linear relationship between the variables. The linear independence of readings ensures that they span the target subspace. In such a case, the regression matrix can be written as:
\begin{equation}
\Rbf\,\Zbf^i = \Zbf^d
\end{equation}
where $\Zbf^d$ are rows of $\Zbf$ corresponding to the dependent variables and $\Zbf^i$ are the rows of $\Zbf$ corresponding to the independent variables. To obtain $\Rbf$ uniquely, $\Zbf^i$ should be a full row rank matrix. By definition of rank,
\begin{equation}
\rank{\Zbf^i} \leq min(n_i,N)
\end{equation}
For $\Zbf^i$ to be full rank, Eq.~\eqref{Eq2} should be satisfied. Hence Proposition 2 is proved. 
\end{proof}

\begin{corollary}
In the noiseless case, $n_i$ number of linearly independent energy measurements in different time intervals are sufficient to determine the phase connectivity of the loads, uniquely.  
\end{corollary}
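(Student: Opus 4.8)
The plan is to derive the Corollary as an immediate consequence of Proposition~2, the four-step noiseless procedure, and Proposition~1; the only point beyond Proposition~2 is the word \emph{sufficient}, i.e.\ that $N=n_i$ linearly independent readings already determine the connectivity and not merely that fewer readings cannot.

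First I would fix a data matrix $\Zbf$ built from $N=n_i$ readings $\zbf_1,\dots,\zbf_{n_i}$ that are linearly independent in $\Rbb^n$, and split each reading as $\zbf_j=[\,\zbf_j^d\,;\ \zbf_j^i\,]$ into its dependent coordinates (the three transformer-phase values) and independent coordinates (the consumer values). The key observation is that in the noiseless case $\zbf_j^d=\Rbf\,\zbf_j^i$ holds \emph{exactly} for the true regression matrix $\Rbf=-\Abf_d$, so the linear map $\zbf_j^i\mapsto\zbf_j$ is injective; hence linear independence of $\{\zbf_j\}$ in $\Rbb^n$ is equivalent to linear independence of the truncated vectors $\{\zbf_j^i\}$ in $\Rbb^{n_i}$. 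Consequently $\Zbf^i$, the $(n_i\times n_i)$ submatrix whose columns are the $\zbf_j^i$, has rank $n_i$ and is invertible --- precisely the full-row-rank condition used in the proof of Proposition~2.

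Next I would run the noiseless procedure on this $\Zbf$. Since every sample lies exactly in the subspace annihilated by the constraint matrix and the $n_i$ readings span that subspace, the SVD in Eq.~\eqref{Eq3} returns $\Ubf_2^T$ as a basis of $S_R(\Cbf)$; the column partition together with Eq.~\eqref{Eq1} then gives $\Rbf=-\Cbf_d^{-1}\Cbf_i$, with $\Cbf_d$ nonsingular because the true constraint block on the dependent columns equals $-\Ibf$ up to the rotation $\Qbf$. With $\Zbf^i$ invertible, $\Rbf\,\Zbf^i=\Zbf^d$ has the unique solution $\Rbf=\Zbf^d(\Zbf^i)^{-1}$; in exact arithmetic the rounding step changes nothing, and by the uniqueness of the regression matrix recalled earlier this $\Rbf$ coincides with $-\Abf_d$, the incidence submatrix on the parent nodes. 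Finally, since the phase-connectivity graph is a directed forest with no self loops, Proposition~1 reconstructs its connectivity uniquely from $\Abf_d$ (hence from $\Abf$, each consumer column of which carries a single nonzero entry), which is exactly the assertion.

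I expect the only genuinely non-bookkeeping step to be the equivalence ``$\{\zbf_j\}$ independent $\Leftrightarrow$ $\{\zbf_j^i\}$ independent,'' since projecting linearly independent vectors onto a coordinate subspace need not preserve independence in general; it is the exact linear dependence of the parent coordinates on the child coordinates that makes it go through. Everything else is an assembly of Proposition~2 (which already carries out the rank count in the other direction), the constraint/regression structure set up earlier, and Proposition~1.
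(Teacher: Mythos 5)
Your proposal is correct, and at the top level it follows the same route the paper intends: combine Proposition~2 with Proposition~1. The difference is in how much is actually proved. The paper's own proof of the Corollary is a single line (``This follows from propositions 1 and 2''), and Proposition~2 as proved there only establishes a \emph{necessary} condition: $\Zbf^i$ can have full row rank only if $N \geq n_i$. It does not show that $n_i$ linearly independent readings actually \emph{suffice}, which is the content of the Corollary. Your argument supplies exactly that missing half: the observation that, in the noiseless case, $\zbf_j^d = \Rbf\,\zbf_j^i$ exactly, so the map $\zbf_j^i \mapsto \zbf_j$ is linear and injective and linear independence of the full readings $\{\zbf_j\}$ is equivalent to linear independence of the truncated readings $\{\zbf_j^i\}$; hence $\Zbf^i$ is an invertible $n_i \times n_i$ matrix, $\Rbf = \Zbf^d(\Zbf^i)^{-1}$ is uniquely determined and equals $-\Abf_d$ (with $\Cbf_d$ nonsingular up to the rotation $\Qbf$, so Eq.~\eqref{Eq1} is well defined), and Proposition~1 then reconstructs the forest uniquely. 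You correctly flag the independence-of-projections step as the only non-trivial point --- it is precisely where a naive reading would fail, since coordinate projections of independent vectors are not independent in general --- and it goes through here only because of the exact linear constraint. So the proposal is not merely a restatement of the paper's one-liner: it closes a genuine (if small) gap in the sufficiency direction that the paper leaves implicit, at the cost of a longer argument; the paper's version buys brevity by treating Proposition~2 as if it already carried the sufficiency claim.
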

\begin{proof}
This follows from propositions 1 and 2.
\end{proof}

\subsection{Noisy Case}
\noindent In practice, measurements are noisy and hence it is imperative to account for the various sources of noise in a distribution network. In the problem of phase identification, we account for the technical losses, random errors in smart meter readings, and smart meter clock synchronization errors.

\subsubsection*{i. Technical Losses}
Major component of technical losses in a distribution network is the copper loss and it is proportional to the square of current in the line. As the voltage is almost constant, the current varies with the load. So in a given time interval, higher the load, higher is the loss and hence the phase meter reading is always greater than the sum of consumer meter readings. This leads to
\begin{equation}
z_{i(m)}^k = z_{i(t)}^k + \sum\limits_{j=1}^{n_k} l_{ij},\,\, \forall \text{ $i$ = 1 to $N$}
\end{equation}
where $z_{i(m)}^k$ is the energy measured with loss included, in phase $k$ in $i^{th}$ time interval, $z_{i(t)}^k$ is true value of energy consumed in phase $k$ and $l_{ij}$ is the energy loss in line connecting transformer phase $k$ and consumer $j$, in $i^{th}$ time interval. Due to this loss, the noise in data is not normally distributed with zero mean. So we need to pre-process the data before applying PCA. 

\subsubsection*{ii. Random Errors in meter readings}
ANSI C12.20-2010, the latest standard for electricity meters, stipulates that electricity meters must be of 0.2 or 0.5 accuracy class \cite{ANSI2010}. That means the meter reading can be in the range of $\pm\, 0.2\%$ of true value for 0.2 accuracy class meter and in the range of $\pm\,0.5\%$ of true value for 0.5 accuracy class meter. Let us consider that all the smart meters in our network are of 0.5 accuracy class. 
\newline This error can be approximately modelled to be Gaussian with each variable having a different error variance. 
\begin{equation}
\zbf_{j(m)} = \zbf_{j(t)}+ \epsilonbf_j
\end{equation}
where $\zbf_{j(m)}$ is the vector of measured values of $n$ variables in $j^{th}$ time interval, $\zbf_{j(t)}$ is the vector of true values of $n$ variables in $j^{th}$ time interval and $\epsilonbf_j$ is the error in the reading in $j^{th}$ time interval
\begin{equation}
\epsilonbf_j \sim \mathcal{N}(\zeros,\Sigmabf_\epsilon) 
\end{equation}
where $\Sigmabf_\epsilon$ is the error covariance matrix. As the errors are not correlated, $\Sigmabf_\epsilon$ is a diagonal matrix.

\subsubsection*{iii. Clock Synchronization errors}
Clock synchronization error can also be modelled as Gaussian. The clocks of all the meters may not be synchronized perfectly leading to time intervals of measurements to be varying. The variation is generally in the order of milliseconds. \begin{equation}
t_{k+1} = t_k + \mathcal{N}(\Delta t,\sigma_t^2)
\end{equation}
where $t_k$ is the $k^{th}$ time instance, $\Delta t$ is the duration of the time interval and $\sigma_t^2$ is the variance in time interval. For example, in a 15 minutes time interval, a variance of one second will lead to an error of $0.1\%$ $(=1/(15\times60))$ and even with a drastic change (say 5 times) of load during that second, will lead to an error of only $0.5\%$.
\newline This error can be formulated similar to the previous one and superimposing them, we get
\begin{eqnarray}
\zbf_{j(m)} &=& \zbf_{j(t)} + \epsilonbf_j + \deltabf_j \\
\epsilon_j + \delta_j &\sim& \mathcal{N}(\zeros,\Sigmabf_\epsilon +\Sigmabf_\delta ) \\
\Sigmabf_e &=& \Sigmabf_\epsilon + \Sigmabf_\delta
\end{eqnarray}

In the noisy case, we propose some additional steps in the algorithm to handle the noise.
\begin{enumerate}
\item The total technical losses can be approximately calculated as:
\begin{equation}
loss_i = \sum\limits_{j=1}^{n_d} z_{ij}^d-\sum\limits_{k=1}^{n_i} z_{ik}^i, \; \forall \text{ $i$=1 to $N$}
\end{equation}
where $loss_i$ is the approximate technical loss component in $i^{th}$ time interval, $z_{ij}^d$ is the value of the $j^{th}$ dependent variable in $i^{th}$ time interval and $z_{ij}^i$ is the value of the $j^{th}$ independent variable in $i^{th}$ time interval.
\newline It is a good approximation because the magnitude of the Gaussian errors is much smaller compared to these losses. As the other technical losses are very small compared to copper losses and as the copper losses are in proportion to load, the technical losses can be separated from the parent nodes in proportion to the readings. 
\begin{equation}
\hat{z_{ij}}^d= z_{ij}^d - \frac{loss_i\,z_{ij}^d}{\sum\limits_{j=1}^{n_d} z_{ij}^d}, \; \forall \text{ $i=1$ to $N$ and $j=1$ to $n_d$ }
\end{equation}
where $\hat{z_{ij}}^d$ are the estimated values of phase meter readings, free from losses. We can apply PCA by substituting the measured values with these estimated values because the expectation of the error in these estimated values is approximately zero.
\item Now we deal with Gaussian error by using MLPCA \cite{Wentzell97a}. It is shown that if the errors in variables are not correlated and error variances are known, the constraint matrix corresponding to error free readings can be obtained by scaling the data matrix by standard deviations of corresponding errors \cite{Wentzell97a},\cite{Narasimhan15}. We take the error variances to be equal to $1\%$ of the mean readings for each variable, which is the maximum possible variance of the Gaussian errors described above.
\newline Now, MLPCA is applied as follows:
\begin{equation}
\Sigmabf_e = \text{diag}(\bar{z}_1,\,\bar{z}_2,\,\cdots,\bar{z}_n)
\end{equation}
where $\bar{z}_i$ is the mean of $N$ samples of $i^{th}$ variable
\begin{equation}
\bar{z}_i= \sum\limits_{j=1}^{N} z_{ij}
\end{equation}
Cholesky decomposition of  $\Sigmabf_e$ is given by
\begin{equation}
\Sigmabf_e = \Lbf\Lbf^T
\end{equation}
where $\Lbf$ is called Cholesky factor and it is a diagonal matrix in this case.
\newline The noisy data matrix is transformed as follows:
\begin{equation}
\Zbf_s = \Lbf^{-1}\Zbf = \Lbf^{-1}\Zbf_t + \Lbf^{-1}\Ebf
\end{equation}
where $\Ebf$ is the error matrix and $\Zbf_t$ is the data matrix with true values.
\newline The covariance matrix of the transformed data matrix is
\begin{equation}
\Sbf_{zs} = \Zbf_s\Zbf_s^T
\end{equation}
By taking expectation of the $\Sbf_{zs}$ \cite{Narasimhan15}, we get 
\begin{equation}
\Ebf(\Sbf_{zs}) =  \Mbf_z + \alpha^2\Ibf
\end{equation}
where $\Mbf_z = \Lbf^{-1}\Zbf_t\Zbf_t^T\Lbf^{-T}$, $\Ibf$ is an $n$-dimensional Identity Matrix and $\alpha^2$ is a scalar $(\alpha^2 \leq 1)$.
It is shown that by applying PCA on $\Zbf_s$, the constraint matrix that we get represents the sub-space in which the data lies \cite{Narasimhan15} . We get the constraint matrix as follows:
\begin{eqnarray}
svd(\Zbf_s) &=& \Ubf_{1s}\Sbf_{1s}\Vbf_{1s}^T + \Ubf_{2s}\Sbf_{2s}\Vbf_{2s}^T \\
\Cbf &=& \Ubf_{2s}^T \Lbf^{-1}
\end{eqnarray}
\item The columns corresponding to dependent and independent variables are separated and the regression matrix is calculated using Eq.~\eqref{Eq1}.
\item Now, the elements in $\Rbf$ are rounded off to truncate any deviations due to noise and numerical residues, by taking the elements closest to 1 to be 1 and rest 0, in each column. The resultant matrix is same as the sub-matrix which infers the connectivity between dependent and independent nodes.
\end{enumerate}
		
\section{Simulation Results}
\noindent The proposed approach is demonstrated using simulated data. Since the noiseless case is trivial, we generated noisy data for different networks and the results are presented. The network was built randomly using the random number generators in MATLAB, as follows:
\begin{enumerate}
\item Three numbers between 5 and 100 were chosen random (uniformly) to assign the number of consumers connected to each phase. 
\item The $N$ readings for each of the consumer meters were sampled from one of the three uniform distributions, with different ranges, to account for consumers with different ranges of loads. Different values of $N$ are chosen as multiples of $n_i$ .
\item Now the $N$ readings for each of the three phase meters were determined by summation of the respective consumer meter readings, connected to them. The technical losses in proportion to the consumer readings were added to the summation. 
\newline World Bank data indicates that countries with good power infrastructure have average losses in the range of $2\%$ to $10\%$ \cite{WorldBank}. Also the losses are projected to come down further in a smart grid set up. In our Simulation, we considered two cases with losses in the range of $2\%$ to $5\%$ and with losses in the range of $5\%$ to $10\%$ of the energy transmitted. 
\item To account for the other errors, we added Gaussian noise to all the readings with mean equal to the reading and standard deviation in the range $0.5\%$ to $1\%$ of the reading.
\end{enumerate}

The algorithm is then applied on 100 such generated data sets, with different number of readings and loss components and the results were  noted. These simulations were carried out in MATLAB version R2014a. The time taken for the algorithm to give the solution was also noted in all the cases (Windows 10, Intel i5-4200U 1.64 Ghz processor, 6 GB RAM). Now we plot our results as follows:
\begin{enumerate}
\item Fig.~\ref{Figure2} shows the time taken to arrive at the solution against the number of nodes for different number of readings, with losses in the range $2\%$ to $5\%$.
\begin{figure}[h!]
\centering
\includegraphics[width=80mm,height=50mm]{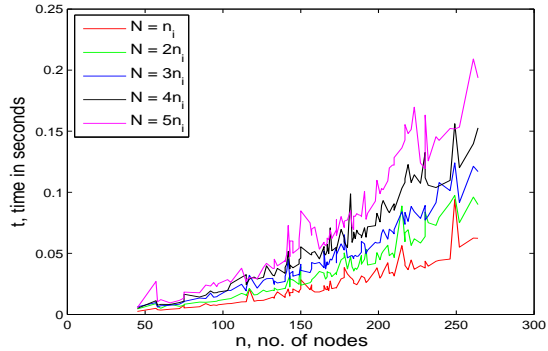}
\caption{No. of nodes (n) Vs Simulation Time (t in seconds) for losses in the range $2\%$ to $5\%$}\label{Figure2}
\end{figure}
\item Fig.~\ref{Figure3} shows the time taken to arrive at the solution against the number of nodes for different number of readings, with losses in the range $5\%$ to $10\%$.
\begin{figure}
\centering
\includegraphics[width=80mm,height=50mm]{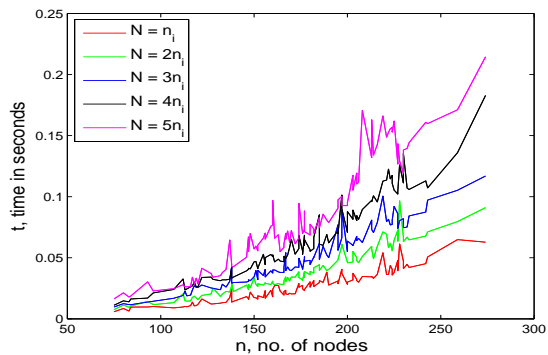}
\caption{No. of nodes (n) Vs Simulation Time (t in seconds) for losses in the range $5\%$ to $10\%$}\label{Figure3}
\end{figure}
\item Fig.~\ref{Figure4} shows the success rate in \% for 100 test cases against the number of readings expressed as ratio $N/n_i$.
\begin{figure}
\centering
\includegraphics[width=80mm,height=50mm]{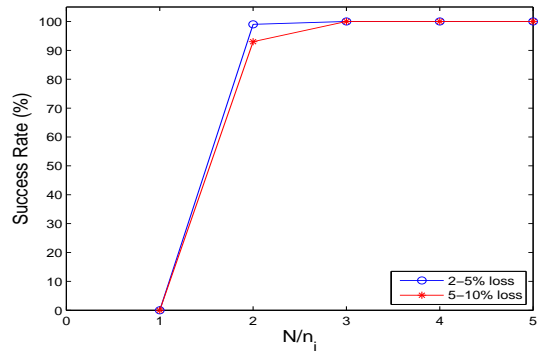}
\caption{ $N/n_i$ Vs Success Rate (in \%)} \label{Figure4}
\end{figure}
\end{enumerate}

Plots Fig:~\ref{Figure2} and Fig:~\ref{Figure3} indicate that the time taken increases with the number of nodes and readings but it is not dependent on the errors. This is apparent because the cost of computations in the algorithm depends only on the number of nodes and readings and is independent of the error. The important point to be observed is that the time taken is in the order of milliseconds. It shows that our algorithm is time efficient compared to search algorithms like the one proposed in \cite{Arya11}, which takes time in the order of seconds to determine connectivity.

Plot Fig:~\ref{Figure4} indicates that the success rate is 100$\%$ in all the cases when $N \geq 3n_i$. Therefore $3n_i$ readings, satisfying the assumptions, are sufficient to infer the phase connectivity exactly.
\section*{VII. Conclusions and Future Work}
\noindent In this paper, we show that the phase connectivity problem can be solved through a novel data-driven approach. Our formulation enables the use of PCA and its graph theoretic interpretation to infer the connectivity directly. 

The Simulations results show that as long as the errors are within the limits assumed, which is so in most of the cases, the connectivity can be exactly determined. Also the time taken is observed to be in the order of milliseconds establishing the efficiency of the algorithm.

In future, we will extend this technique to reconstructing the complete distribution network, from sub-station to consumers, given the meter readings at all nodal points. We would also like to handle the problem of localizing non-technical losses, especially theft, and also address the problem of corrupted or missing measurements.
\section*{Acknowledgment}
\noindent We would like to thank Prof. Shankar Narasimhan of IIT Madras for his valuable inputs. The finance support to Satya Jayadev P. and Aravind Rajeswaran from Data Science Initiative Grant of IIT Madras, and Nirav Bhatt from Department of Science \& Technology, India through INSPIRE Faculty Fellowship is acknowledged.

\end{document}